\newtheorem{theorem}{Theorem}
\newtheorem{lemma}{Lemma}
\newcommand{\I}{\mathbf{I}}
\newcommand{\A}{\mathbf{A}}
\newcommand{\g}{\mathbf{g}}
\renewcommand{\H}{\mathbf{H}}
\newcommand{\h}{\mathbf{h}}
\newcommand{\M}{\mathbf{M}}
\newcommand{\N}{\mathcal{N}}
\renewcommand{\P}{\mathbf{P}}
\newcommand{\p}{\mathbf{p}}
\newcommand{\R}{\mathbf{R}}
\renewcommand{\S}{\mathbf{S}}
\newcommand{\vel}{\mathbf{v}}
\newcommand{\w}{\mathbf{w}}
\newcommand{\x}{\mathbf{x}}
\newcommand{\z}{\mathbf{z}}
\newcommand{\zero}{\mathbf{0}}
\title{A Cooperative Bearing-Rate Approach for \\
Observability-Enhanced Target Motion Estimation
}
\author{Canlun Zheng$^{1,2}$, Hanqing Guo$^2$, Shiyu Zhao$^2$
\thanks{$^1$College of Computer Science and Technology, Zhejiang University, Hangzhou, China. $^2$WINDY Lab, Department of Artificial Intelligence, Westlake University, Hangzhou, China. {\{zhengcanlun, guohanqing, zhaoshiyu\}@westlake.edu.cn}
This research work was supported by National Natural Science Foundation of China (Grant No. 62473320). Corresponding author: S. Zhao.
}
}
\begin{document}
%\author
%\IEEEoverridecommandlockouts
\maketitle
\begin{abstract}	
Vision-based target motion estimation is a fundamental problem in many robotic tasks. The existing methods have the limitation of low observability and, hence, face challenges in tracking highly maneuverable targets. Motivated by the aerial target pursuit task where a target may maneuver in 3D space, this paper studies how to further enhance observability by incorporating the \emph{bearing rate} information that has not been well explored in the literature. The main contribution of this paper is to propose a new cooperative estimator called STT-R (Spatial-Temporal Triangulation with bearing Rate), which is designed under the framework of distributed recursive least squares. This theoretical result is further verified by numerical simulation and real-world experiments. It is shown that the proposed STT-R algorithm can effectively generate more accurate estimations and effectively reduce the lag in velocity estimation, enabling tracking of more maneuverable targets.
\end{abstract}

\section{Introduction}

The research work in this paper is motivated by the task of vision-based aerial target pursuit, where pursuer micro aerial vehicles (MAVs) can detect, localize, and follow a target MAV \cite{vrba2020marker,li2022three,saini2019markerless}. This task is inspired by bird-capturing-bird behaviors in nature \cite{brighton2019hawks}. 
One critical problem in countering malicious MAVs is vision-based target motion estimation, which is also a fundamental problem in many other tasks such as autonomous driving and obstacle avoidance \cite{griffin2021depth}.

The most common method for vision-based target motion estimation formulates vision as a \emph{bearing-only} sensing approach \cite{li2022three, su2022bearing, sharma2011graph}.
To recover the distance information in the bearing-only case, the observability condition that the observer should have a higher-order motion than the target must be satisfied \cite{ning2024bearingangle}. Our previous works have studied how the observer should maneuver to enhance observability \cite{li2022three, flayac2023nonuniform,ning2024bearingangle}.
One effective method to enhance observability is to use multiple observers to estimate cooperatively \cite{sharma2011graph, sharma2013bearing, zhao2014optimal,schiano2018dynamic}.
In our latest work \cite{zheng2023optimal}, we introduce a cooperative estimator that can efficiently localize a target using the spatial-temporal information derived from bearing measurements.

Although multiple observers can triangulate to determine the target's position, the target's velocity cannot be directly calculated and can only be predicted through filtering \cite{shen2013vision, gardner2022pose, riaz2022state}. Such velocity predictions may exhibit considerable lag when the target maneuvers. 

To address this problem, we explore the \emph{bearing rate measurement} (i.e., the derivative of the bearing vector) that contains valuable \emph{velocity information} of the target. 
Bearing rate measurements can be obtained from vision sensing, especially optical flow \cite{2020RAFT}.

Currently, the bearing rate information in motion estimation problems has only been studied in the case of a single observer. The bearing rate information has also been studied in the context of cooperative formation control problems to improve control performance \cite{zhao2019bearing}, but it is about formation control instead of motion estimation.

\begin{figure}
\centering
\includegraphics[width=1\linewidth]{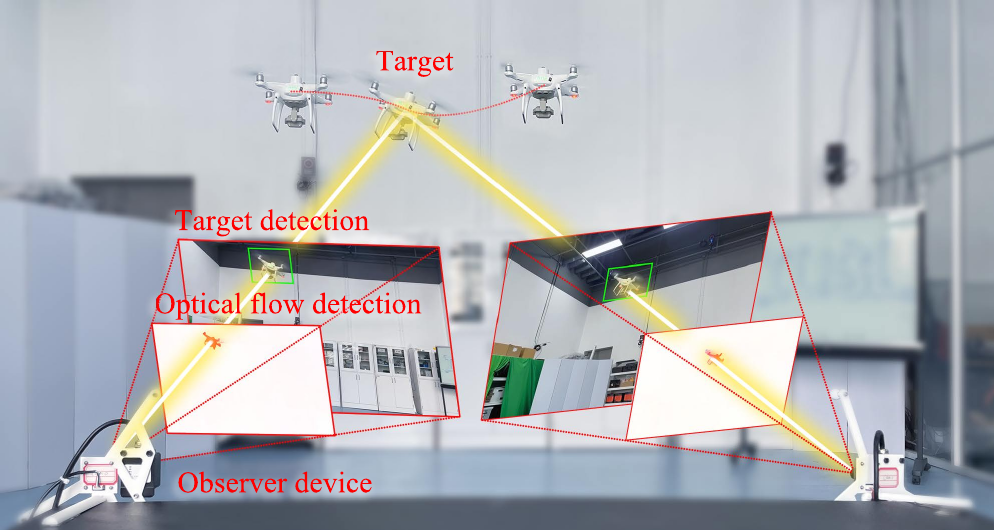}
\caption{Two observers cooperatively estimate a target MAV state with target detection and optical flow detection.}
\label{fig_env}
\end{figure}

The contributions of this paper are summarized as follows.

1) To obtain the bearing rate measurement from vision, we jointly employ the target detector YOLOv5 and the deep-learning optical flow detector RAFT \cite{2020RAFT}. 
By focusing on the central pixel of the target's bounding box and the external parameters, we can obtain a stable bearing rate measurement in the world frame.

2) We establish the pseudo-linear measurement equation of the bearing rate measurements and fuse it with other information under the distributed recursive least squares \cite{mateos2012distributed}. In particular, we extend the spatial-temporal triangulation (STT) method, proposed in \cite{zheng2023optimal}, into a new cooperative estimator called STT-R (Spatial-Temporal Triangulation with bearing Rate).

3) Necessary and sufficient observability conditions are presented to theoretically reveal the role of the additional bearing rate measurement in the estimation process. When there are multiple observers, it is remarkable that the system is observable for any single time step due to the introduction of multiple observers and the additional bearing rate information.

Finally, both numerical simulations and real-world experiments have been conducted to verify the effectiveness of the proposed approach. 
\begin{figure*}
\centering
\includegraphics[width=1\linewidth,trim= 0 85 0 75,clip]{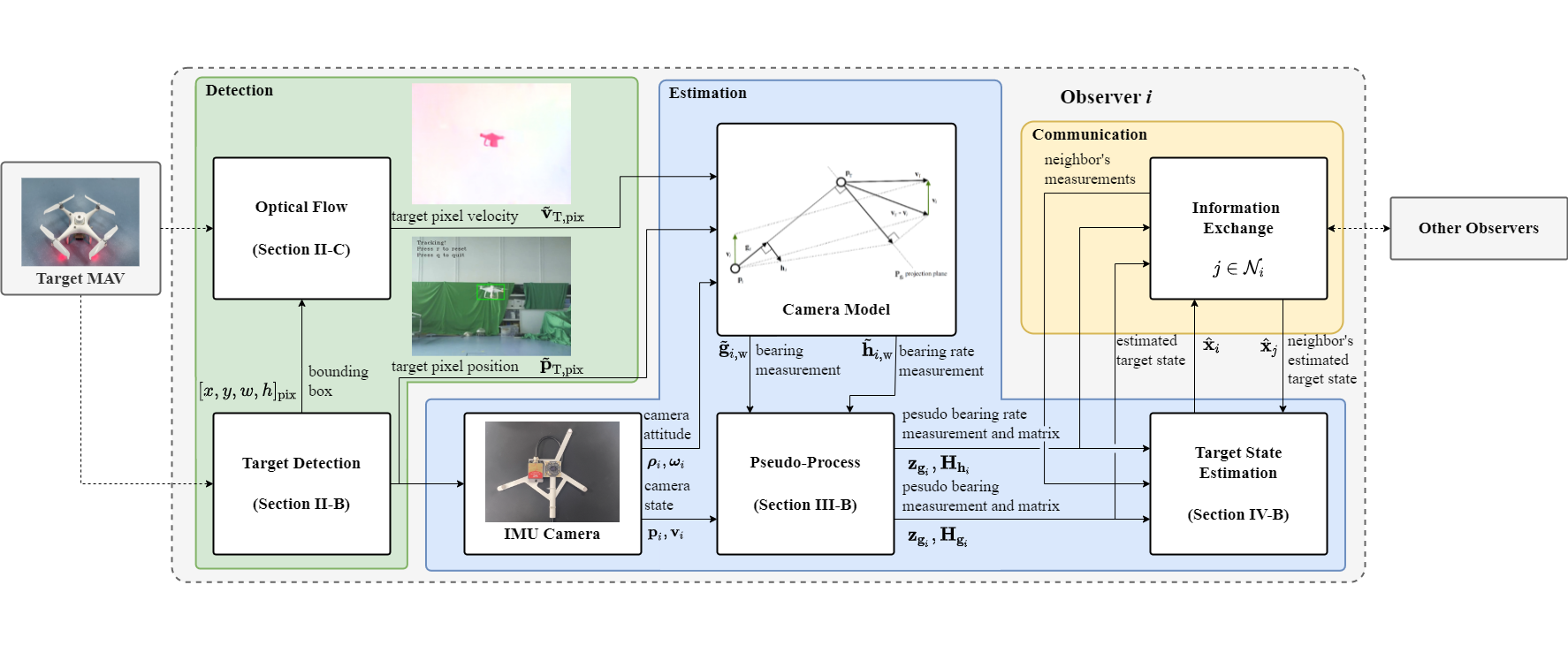}
\caption{System architecture of the proposed target estimation approach. The system includes three parts: detection, estimation, and communication.}
\label{fig_framework}
\end{figure*}

\section{Problem Setup}\label{sec_prob}

\subsection{Problem Statement}\label{subsec_prob_state}

Consider a scenario where multiple cameras observe a moving target in 3D space.
Let $\mathbf{p}_i, \vel_i\in\mathbb{R}^3$ $(i = 1, 2, ..., n)$ be the position and velocity of observer $i$.
Let $\p_{{\rm T}}, \vel_{{\rm T}} \in\mathbb{R}^3$ be the true position and velocity of the target, respectively. $\p_{{\rm T}}$ and $\vel_{{\rm T}}$ are unknown and need to be estimated. The estimation relies on two key quantities: bearing and bearing rate.

First, the bearing vector between observer $i$ and the target is
\begin{align}
\g_i = \frac{\mathbf{p}_{{\rm T}} - \mathbf{p}_i}{r_i}\in\mathbb{R}^3,
\label{eq_bearingAccurate}
\end{align}
where $r_i = \|\p_{{\rm T}} - \mathbf{p}_i\|_2$.
Second, the bearing rate $\h_i$ can be obtained by taking the time derivative on both sides of \eqref{eq_bearingAccurate}:
\begin{align}
	\h_i
 = \frac{(\mathbf{v}_{{\rm T}} - \mathbf{v}_i) - \g_i\dot{r}_i}{r_i}, \label{eq_gdot_rdot}
\end{align}
where $\dot{r_i}$ is unknown. We next calculate $\dot{r}_i$. Multiplying $\g_i^Tr_i$ on both sides of \eqref{eq_bearingAccurate} yields 
\begin{align}
    r_i \g_i^T\g_i = \g_i^T(\p_{\rm T} - \p_i).\label{eq_r_i}
\end{align}
Hence $r_i =\g_i^T(\p_{\rm T} - \p_i)$.
Taking the derivative of both sides of \eqref{eq_r_i} gives
\begin{align}
	\dot{r}_i & = \h^T_i(\mathbf{p}_{{\rm T}} - \mathbf{p}_i) + \g^T_i(\vel_{\rm T} - \vel_i) 
= \g^T_i(\vel_{\rm T} - \vel_i). \label{eq_dot_r}
\end{align}
Finally, substituting \eqref{eq_dot_r} into \eqref{eq_gdot_rdot} we have
\begin{align}
\h_i& = \frac{\P_{\g_i} (\vel_{\rm T} - \vel_i)}{r_i}, \label{eq_gdot}
\end{align}
where \(\h_i^T\g_i=0\) since  $\h_i$ is orthogonal to $\g_i$,  $\P_{\g_i} = \I - \g_i\g^T_i$ \cite{zhao2019bearing}. 

The problem to be solved in this paper is stated as follows. Suppose every observer 1) knows its own position $\p_i$ and velocity $\vel_i$; 2) can obtain a noisy measurement of $\g_i$ and a noisy measurement of $\h_i$; and 3) can receive specific information from its nearest neighbors (the measurements and estimated target's state). The goal is to estimate the target's position and velocity, $\p_{{\rm T}}$ and $\vel_{{\rm T}}$, accurately and promptly.

\subsection{Bearing Measurement}\label{sub_g}

Many deep-learning detectors are available for target detection. For example, the YOLO series is a popular one-stage object detector based on convolutional neural networks \cite{redmon2018yolov3, bochkovskiy2020yolov4}. It can achieve a good balance between accuracy and speed \cite{2021Air, wu2020using, isaac2021unmanned}.

A detector can generate a bounding box that tightly surrounds the target in the image.
Suppose $[\tilde{\p}_{i,{\rm T}}(x),\tilde{\p}_{i,{\rm T}}(y)]^T_{i, {\rm pix}}$ is the pixel coordinate of the center of the bounding box in the image of observer $i$.
The noisy measurements of $\g_i$ in the camera frame can be calculated as
\begin{align}
\tilde{\g}_{i, {\rm c}} & =	\mathbf{K}^{\rm c}_{i, {\rm pix}}\begin{bmatrix}
\tilde{\p}_{i,{\rm T}}(x)\\ \tilde{\p}_{i,{\rm T}}(y) \\ 1
\end{bmatrix} = \g_{i, {\rm c}} + \mathbf{w}_{\g_{i, {\rm c}}},
\end{align}
where $\mathbf{w}_{\g_{i,c}}$ is noise, $\mathbf{K}^{\rm c}_{i, {\rm pix}}$ is a static transformation matrix from the pixel frame to the camera frame. It can be obtained from the camera's intrinsic parameters.
Then, the bearing measurement expressed in the world frame can be obtained as
\begin{align}\label{eq_rel_cam2world_g}
\tilde{\g}_i &= \R_{i, {\rm c}}^{\rm w}\tilde{\g}_{i, {\rm c}}= \g_i +\R_{i, {\rm c}}^{\rm w}\mathbf{w}_{\g_{i, {\rm c}}},
\end{align}
where $\R_{i, {\rm c}}^{\rm w}$ is the rotation from the camera frame to the world frame. It can be calculated based on the attitude of the $i$th observer's camera.

\subsection{Bearing Rate Measurement}\label{sub_h}

Unlike bearing measurements, bearing rate measurements have not been widely used for motion estimation. In fact, we can use the deep-learning optical flow model RAFT \cite{2020RAFT} trained on synthetic datasets \cite{raft-things} to extract the velocity of every pixel.
With the bounding box provided by a target detector such as YOLO, we can obtain the velocity of the target in the image.

In particular, let $\tilde{\vel}_{i, {\rm T, pix}}\in\mathbb{R}^2$ be the velocity of the center pixel of the bounding box for observer $i$. The measurement of the bearing rate expressed in the camera frame can be obtained as
\begin{align}
\tilde{\h}_{i, {\rm c}} & = \P_{\tilde{\g}_{i, {\rm c}}}\mathbf{K}^{\rm c}_{i, {\rm pix}}\begin{bmatrix}
\tilde{\vel}_{i, {\rm T, pix}}(x)\\\tilde{\vel}_{i, {\rm T, pix}}(y)\\1
\end{bmatrix} = \h_{i, {\rm c}} + \mathbf{w}_{\h_{i, {\rm c}}},
\end{align}
where $\mathbf{w}_{\h_{i, {\rm c}}}$ is a noise, $\P_{\tilde{\g}_{i, {\rm c}}}=\I - \tilde{\g}_{i, {\rm c}}\tilde{\g}_{i, {\rm c}}^T$. The bearing rate in the camera frame can be converted to the world frame by taking the derivative of both sides of \eqref{eq_rel_cam2world_g} as
\begin{align}
\tilde{\h}_i = &~\P_{\g_{i}}(\dot{\R}_{i, {\rm c}}^{\rm w}\tilde{\g}_{i, {\rm c}} + \R_{i, {\rm c}}^{\rm w}\tilde{\h}_{i, {\rm c}})\nonumber\\
= & ~\h_i + \P_{\g_{i}}\underbrace{\dot{\R}_{i, {\rm c}}^{\rm w}\R_{i, {\rm c}}^{\rm w}\mathbf{w}_{\g_{i, {\rm c}}} + \R_{i, {\rm c}}^{\rm w}\mathbf{w}_{\h_{i, {\rm c}}}}_{\mathbf{w}_{\h_i}},\label{eq_noisy_gdot}
\end{align}
where $\dot{\R}_{i, {\rm c}}^{\rm w}$ is the time derivative of the rotational transformation $\R_{i, {\rm c}}^{\rm w}$. Based on \cite{zhao2016time}, $\dot{\R}_{i, {\rm c}}^{\rm w}$ can be obtained as $\dot{\R}_{i, {\rm c}}^{\rm w} = \R_{i, {\rm c}}^{\rm w} [\omega_{i, {\rm c}}]_{\times}$. The angular velocity $\omega_{i, {\rm c}}$ can be measured by the gyroscope sensor.

\subsection{System Architecture}

The system architecture for cooperative target motion estimation is shown in Fig.~\ref{fig_framework}.
The measurement equations corresponding to the vision measurements are presented in Sections~\ref{sub_g} and \ref{sub_h}.
The main contribution of this paper is to propose a novel estimator that can fully utilize the vision measurements. The details are given in Section~\ref{sec_STT_R}.

\section{State and Measurement Equations}

In this section, we establish the state and measurement equations.
The measurement equations indicate the relationship between the measurements and the target's state. They are used later when we design our estimator.

\subsection{State Equations}
The state vector of the target is $\x_k =[\p_{{\rm T}, k}^T,\vel_{{\rm T}, k}^T]^T\in\mathbb{R}^6$. If no information about the target's motion is available, it is common to model the target's motion as a discrete-time noise-driven double-integrator \cite{li2022three, ning2023comparison}:
 $\x_{k+1} = \A\x_{k} +\mathbf{B}\mathbf{w}_{k}.$
where $\A$ and $\mathbf{B}$ are
\begin{align}
\A
&=\begin{bmatrix}
\I & \Delta t\I \\
\zero&\I
\end{bmatrix}\in\mathbb{R}^{6\times6},
\quad \mathbf{B}
=\begin{bmatrix}
\frac{1}{2}\Delta t^2\I\\
\Delta t \I
\end{bmatrix}\in\mathbb{R}^{6\times 3}. 
\end{align}
Here, $\mathbf{w}_k\in\mathbb{R}^3$ is a zero-mean Gaussian noise with the covariance matrix as $\Sigma_{\mathbf{w}} = \sigma_{\mathbf{w}}^2\I$; $\Delta t$ is the sampling time, and $\I$ is the identity matrix.

\subsection{Measurement Equations}

Next, we derive the nonlinear measurement equations to make the vision measurements pseudo-linear to achieve better estimation stability.

First, the bearing measurement $\tilde{\g}_i$ can be expressed as
\begin{align}
\tilde{\g}_i =\R_{\epsilon_i}\g_i = \g_i + \underbrace{(\R_{\epsilon_i} - \I)\g_i}_{\boldsymbol{\mu}_i},
\label{eq_measuredBearingVector}
\end{align}
where $\R_{\epsilon_i}$ is a rotation matrix that perturbs $\g_i$ by a small random angle $\epsilon_i$. The rotation axis is a random unit vector orthogonal to $\g_i$.

It is clear that \eqref{eq_measuredBearingVector} is a nonlinear equation in the target's state.
Based on the derivation in \cite{zheng2023optimal}, we have
\begin{align}
\underbrace{\P_{\tilde{\g}_i}\p_i}_{\z_{\g_i}} = \underbrace{\begin{bmatrix}
\P_{\tilde{\g}_i} & \zero
\end{bmatrix}}_{\H_{\g_i}}\x + \underbrace{r_i\P_{\tilde{\g}_i}\boldsymbol{\mu}_i}_{\boldsymbol{\nu}_{\g_i}},
\label{eq_g_pseudo}
\end{align}
where $\z_{\g_i}$ is the pseudo-linear measurement.

Second, the measurement equation corresponding to the bearing rate measurement is derived as follows.
Substituting \eqref{eq_gdot} to \eqref{eq_noisy_gdot} yields
\begin{align}
\tilde{\h}_i = \frac{\P_{\g_i}(\vel_{{\rm T}}-\vel_i)}{r_i} + \P_{\g_i}\w_{\h_i},
\end{align}
which can be reorganized as
\begin{align}
\tilde{\h}_ir_i + \P_{\g_i}\vel_i = \P_{\g_i}\vel_{{\rm T}} + r_i\P_{\g_i}\w_{\h_i}.\label{eq_gdot_r_rel}
\end{align}
Since $r_i$ and $\P_{\g_i}$ are unknown, we next remove them from this equation. According to \eqref{eq_bearingAccurate}, we have $r_i =\g_i^T(\p_{\rm T} - \p_i)$. Substituting $\g_i=\tilde{\g}_i -\boldsymbol{\mu}_i$, according to \eqref{eq_measuredBearingVector}, into $r_i =\g_i^T(\p_{\rm T} - \p_i)$ gives
\begin{align}
r_i = \tilde{\g}_i^T(\p_{\rm T} - \p_i) - \underbrace{\boldsymbol{\mu}_i^T(\p_{\rm T} - \p_i)}_{\nu_{r_i}}.\label{eq_r_tilde_g}
\end{align}
Furthermore, the relationship between $\P_{\g_i}$ and $\P_{\tilde{\g}_i}$ is
\begin{align}
\P_{\g_i} = \P_{\tilde{\g}_i} + \underbrace{\tilde{\g}_i\tilde{\g}_i^T - \g_i\g_i^T}_{\boldsymbol{\Sigma}_{\P_i}}.\label{eq_rel_P_g}
\end{align}
Substituting \eqref{eq_r_tilde_g} and \eqref{eq_rel_P_g} into \eqref{eq_gdot_r_rel} yields
\begin{align}
& \tilde{\h}_i\tilde{\g}_i^T(\p_{\rm T} - \p_i) + \P_{\tilde{\g}_i}\vel_i \\
& = \P_{\tilde{\g}_i}\vel_{\rm T} + r_i\P_{\g_i}\w_{\h_i} + \tilde{\h}_i\nu_{r_i} + \boldsymbol{\Sigma}_{\P_i}(\vel_{\rm T} - \vel_i),
\end{align}
which can be reorganized as
\begin{align}
\underbrace{-\tilde{\h}_i\tilde{\g}_i^T\p_i + \P_{\tilde{\g}_i}\vel_i}_{\z_{\h_i}} & = \underbrace{\begin{bmatrix}
-\tilde{\h}_i\tilde{\g}_i^T& \P_{\tilde{\g}_i}
\end{bmatrix}}_{\H_{\h_i}}\x  + \boldsymbol{\nu}_{\h_i},\label{eq_rate_pseudo}
\end{align}
where $\boldsymbol{\nu}_{\h_i} = r_i\P_{\g_i}\w_{\h_i} + \tilde{\h}_i\nu_{r_i}+ \boldsymbol{\Sigma}_{\P_i}(\vel_{\rm T} - \vel_i)$.

Up to now, we have successfully established the pseudo-linear measurement equations \eqref{eq_g_pseudo} and \eqref{eq_rate_pseudo} for $\tilde{\g}_i$ and $\tilde{\h}_i$, respectively. 

\section{Cooperative Motion Estimator}\label{sec_STT_R}

In this section, we propose a new motion estimator called \emph{spatial-temporal triangulation with bearing rate} (STT-R).

\subsection{Objective Function}

The estimation process is formulated as a recursive least-squares problem.
The objective function should be well-designed to incorporate all available information, including visual measurements and the information exchanged between neighboring observers. Moreover, the objective function should balance estimation performances, including smoothness and convergence speed.

To that end, the objective function for observer $i$ at time $k$ is designed as
\begin{align}
&J\left(\hat{\x}_{i,k}\right) =\sum_{t=1}^{k}\lambda_t^{(k)}\left(c_1e^{\g}_{i,t}+c_2e^{\h}_{i,t} + e^{\rm cons}_{i,t}\right), \label{eq_J_x_k}
\end{align}
where $c_1$ and $c_2$ are positive weights, and $\lambda_t^{(k)}$ is a forgetting factor, which is an important parameter that will be discussed in detail later.
Here, $e^{\g}_{i,t}$, $e^{\h}_{i,t}$, and $e^{\rm cons}_{i,t}$ correspond to three types of \emph{inconsistency}: $e^{\g}_{i,t}$ corresponds to the inconsistency between current estimates and the bearing measurements; $e^{\h}_{i,t}$ corresponds to the inconsistency between current estimates and the bearing rate measurements; and $e^{\rm cons}_{i,t}$ corresponds to the inconsistency between estimates of different observers.
In particular, they are designed as
\begin{align}
e^{\g}_{i,t} & =\sum_{j\in(i\cup \N_{i,t})}\alpha_{ij, t}\left\|\z_{\g_{j, t}} - \H_{\g_{j, t}}\A^{t-k}\hat{\x}_{i,k}\right\|^2_{\R_{\g}}, \\
e^{\h}_{i,t} & = \sum_{j\in(i\cup \N_{i,t})} \beta_{ij, t}\left\|\z_{\h_{j, t}} - \H_{\h_{j, t}}\A^{t-k}\hat{\x}_{i,k}\right\|^2_{\R_{\h}}, \\
e^{\rm cons}_{i,t} & = \sum_{j\in(i\cup \N_{i,t})} \zeta_{ij, t}\left\|\hat{\x}_{j, t} - \A^{t-k}\hat{\x}_{i,k}\right\|^2,
\end{align}
where $\N_{i,t}$ is the set of the neighboring observers of observer $i$ at time $t$, $\alpha_{ij, t}\geq 0, \beta_{ij, t}\geq 0, \zeta_{ij, t}\geq 0$ are the weight values and $\zeta_{ij, t}$ satisfies $\sum_{j\in(i\cup\N_{i,t})}\zeta_{ij, t} = 1$.
Note that $\|\x\|^2_\mathbf{W}=\x^T\mathbf{W}\x$ for any vector $\x$ and weight matrix $\mathbf{W}$. The weight matrices $\R_{\g}$ and $\R_{\h}$ are selected as $\R_{\g} = \I/\sigma_{\nu_{\g}}^2$ and $\R_{\h} = \I/\sigma_{\nu_{\h}}^2$.
The forgetting factor is designed as \(\lambda_t^{(k)} = \gamma_2^{k-t}/\gamma_1^{k-t+1}\),
where $\gamma_1$ and $\gamma_2$ are positive constants.

\subsection{STT-R Estimator}

The recursive STT-R algorithm is given as follows. It consists of three steps: \emph{prediction}, \emph{innovation}, and \emph{correction}.
\begin{align}	
\intertext{\textbf{Prediction:}}
\hat{\x}^{-}_{i,k} & = \A\hat{\x}_{i,k-1}, \label{eq_x_pred}\\
\hat{\M}^{-}_{i,k} & = \frac{1}{\gamma_1}\left(\A\hat{\M}_{i,k-1}\A^T\right)^{-1}, \label{eq_M_pred}\\
\intertext{\textbf{Innovation:}}
\mathbf{e}^{\g}_{i,k} & = c_1 \sum_{j\in(i\cup\N_{i,t})} \alpha_{ij, k} \H_{\g_{j, k}}^T\R_{\g}\left(\z_{\g_{j, k}} - \H_{\g_{j, k}}\hat{\x}_{i,k}^{-} \right), \label{eq_error_z_g}\\
\mathbf{e}^{\h}_{i,k} & =c_2 \sum_{j\in(i\cup\N_{i,t})} \beta_{ij, k} \H_{\h_{j, k}}^T\R_{\h}\left(\z_{\h_{j, k}} - \H_{\h_{j, k}}\hat{\x}_{i,k}^{-} \right), \label{eq_error_z_gdot}\\	
\mathbf{e}^{\text{cons}}_{i,k} & = \sum_{j\in(i\cup\N_{i,k})} \zeta_{ij, k} \left(\hat{\x}_{j, k}^{-} - \hat{\x}_{i,k}^{-} \right), \label{eq_error_consensus}\\	
\S_{i,k} & =c_1\sum_{j\in(i\cup\N_{i,t})}\alpha_{ij, k}\H_{\g_{j, k}}^T\R_{\g}\H_{\g_{j, k}} \nonumber\\
& \qquad + c_2\sum_{j\in(i\cup\N_{i,t})}\beta_{ij, k}\H_{\h_{j, k}}^T\R_{\h}\H_{\h_{j, k}}+ \I , \label{eq_covariance_S}\\
\intertext{\textbf{Correction:}}
\hat{\M}_{i,k} & = (\gamma_2\hat{\M}_{i,k}^{-}+ \S_{i,k})^{-1}, \label{eq_M_correction}\\
\hat{\x}_{i,k} & = \hat{\x}_{i,k}^{-} + \hat{\M}_{i,k} (\mathbf{e}^{\g}_{i,k} + \mathbf{e}^{\h}_{i,k}+\mathbf{e}^{\text{cons}}_{i,k}).\label{eq_x_correction}
\end{align}

\emph{Prediction:}
$\hat{\x}_{i,k-1}$ and $\hat{\M}_{i,k-1}$ are the optimal estimates for the previous time $(k-1)$. They are predicted with the state transition model given in \eqref{eq_x_pred} and \eqref{eq_M_pred} for the next time step update.

\emph{Innovation:} Three different errors are calculated with the new arrival information given in \eqref{eq_error_z_g}, \eqref{eq_error_z_gdot}, and \eqref{eq_error_consensus}, respectively. $\mathbf{e}^{\g}_{i,k}$ is the error in the bearing measurements $\g$. $\mathbf{e}^{\h}_{i,k}$ is the error in the bearing rate measurements $\h$. $\mathbf{e}^{\rm cons}_{i,k}$ is the estimated states' inconsistent error between neighboring observers. $\S_{i,k}$ in \eqref{eq_covariance_S} is the sum weight matrix of these three errors.

\emph{Correction:} $\hat{\M}_{i,k}^{-}$ is rectified by the innovative information $\S_{i,k}$ as shown in \eqref{eq_M_correction}. Subsequently, in \eqref{eq_x_correction}, the predicted target's state $\hat{\x}_{i,k}^{-}$ is updated with the three innovation errors.

\section{Observability Analysis}\label{sec_obs}

Although the STT-R estimator incorporates an additional rate measurement, it is not immediately clear how this measurement can enhance the system's observability. 

The proof relies on the following lemma about orthogonal projection matrices. This lemma is also used in many other analyses in this paper.

\begin{lemma}[\cite{zheng2023optimal}]\label{lemma_rank_P_1_P_2}
For any unit bearing vectors $\g_1, \g_2\in\mathbb{R}^3$, we have ${\rm rank}(\P_{\g_1}+\P_{\g_2})=3$ if and only if $\g_1$ is not parallel to $\g_2$.
\end{lemma}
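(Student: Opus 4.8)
The plan is to establish the iff claim by treating the two directions separately and reducing everything to a rank-nullity argument on $3\times 3$ symmetric matrices. For the forward direction I would prove the contrapositive: suppose $\g_1 \parallel \g_2$, so $\g_2 = \pm\g_1$ and hence $\P_{\g_1} = \P_{\g_2}$, giving $\P_{\g_1}+\P_{\g_2} = 2\P_{\g_1}$. Since $\P_{\g_1} = \I - \g_1\g_1^T$ is the orthogonal projector onto the plane perpendicular to $\g_1$, it has rank $2$, so $\mathrm{rank}(\P_{\g_1}+\P_{\g_2}) = 2 \neq 3$. This disposes of one implication immediately.

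For the converse, assume $\g_1$ is not parallel to $\g_2$ and show $\P_{\g_1}+\P_{\g_2}$ is nonsingular. Each $\P_{\g_j}$ is symmetric positive semidefinite (it is a projector), so the sum is symmetric positive semidefinite, and it suffices to show its kernel is trivial, i.e.\ $\x^T(\P_{\g_1}+\P_{\g_2})\x = 0 \implies \x = \zero$. Because both quadratic forms are nonnegative, the sum vanishing forces $\x^T\P_{\g_1}\x = 0$ and $\x^T\P_{\g_2}\x = 0$ simultaneously; since $\P_{\g_j} = \P_{\g_j}^2$ this means $\P_{\g_1}\x = \zero$ and $\P_{\g_2}\x = \zero$, i.e.\ $\x \in \ker\P_{\g_1} \cap \ker\P_{\g_2}$. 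But $\ker\P_{\g_j} = \mathrm{span}(\g_j)$, a one-dimensional subspace, and two distinct lines through the origin (distinct because $\g_1 \nparallel \g_2$) intersect only at $\zero$. Hence $\x = \zero$, so $\P_{\g_1}+\P_{\g_2}$ has full rank $3$.

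I expect no serious obstacle here; the only point requiring a little care is the step $\x^T\P_{\g_j}\x = 0 \Rightarrow \P_{\g_j}\x = \zero$, which relies on $\P_{\g_j}$ being a symmetric idempotent (so $\x^T\P_{\g_j}\x = \|\P_{\g_j}\x\|^2$), and the identification $\ker\P_{\g_j} = \mathrm{span}(\g_j)$, which follows from $\P_{\g_j}\g_j = \g_j - \g_j(\g_j^T\g_j) = \zero$ together with $\mathrm{rank}(\P_{\g_j}) = 2$. Both are elementary, so the whole argument is short; the substance of the lemma is really just the geometric fact that the kernels are the two bearing lines and these coincide exactly when the bearings are parallel.
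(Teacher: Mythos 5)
Your proof is correct. Note that the paper does not actually prove this lemma -- it is imported from the cited prior work \cite{zheng2023optimal} -- so there is no in-paper argument to compare against; your write-up serves as a valid standalone proof. Both directions are sound: the contrapositive for the forward implication correctly uses $\g_2=\pm\g_1\Rightarrow\g_2\g_2^T=\g_1\g_1^T$ so the sum collapses to $2\P_{\g_1}$ of rank $2$, and the converse correctly exploits positive semidefiniteness plus the symmetric-idempotent identity $\x^T\P_{\g_j}\x=\|\P_{\g_j}\x\|^2$ to reduce the kernel of the sum to $\mathrm{span}(\g_1)\cap\mathrm{span}(\g_2)=\{\zero\}$ when the bearings are non-parallel.
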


We can obtain the observability matrix as
\begin{align}
\mathbf{Q}_{i,k} =
\begin{bmatrix}
w_{i1,k}\H_{\g_{1, k}}\A^k\\
w_{i1,k}\H_{\h_{1, k}}\A^k\\
\vdots\\
w_{in,k}\H_{\g_{n, k}}\A^k\\
w_{in,k}\H_{\h_{n, k}}\A^k
\end{bmatrix}.\label{eq_obv_matrix}
\end{align}
Here, $w_{ij,k}=1$ if observer $i$ can obtain the information from observer $j$ at time $k$. Otherwise, $w_{ij,k}=0$.
We always have $w_{ii,k}=1$ for any $k$.

\begin{figure*}
    \centering
    \includegraphics[width=\textwidth]{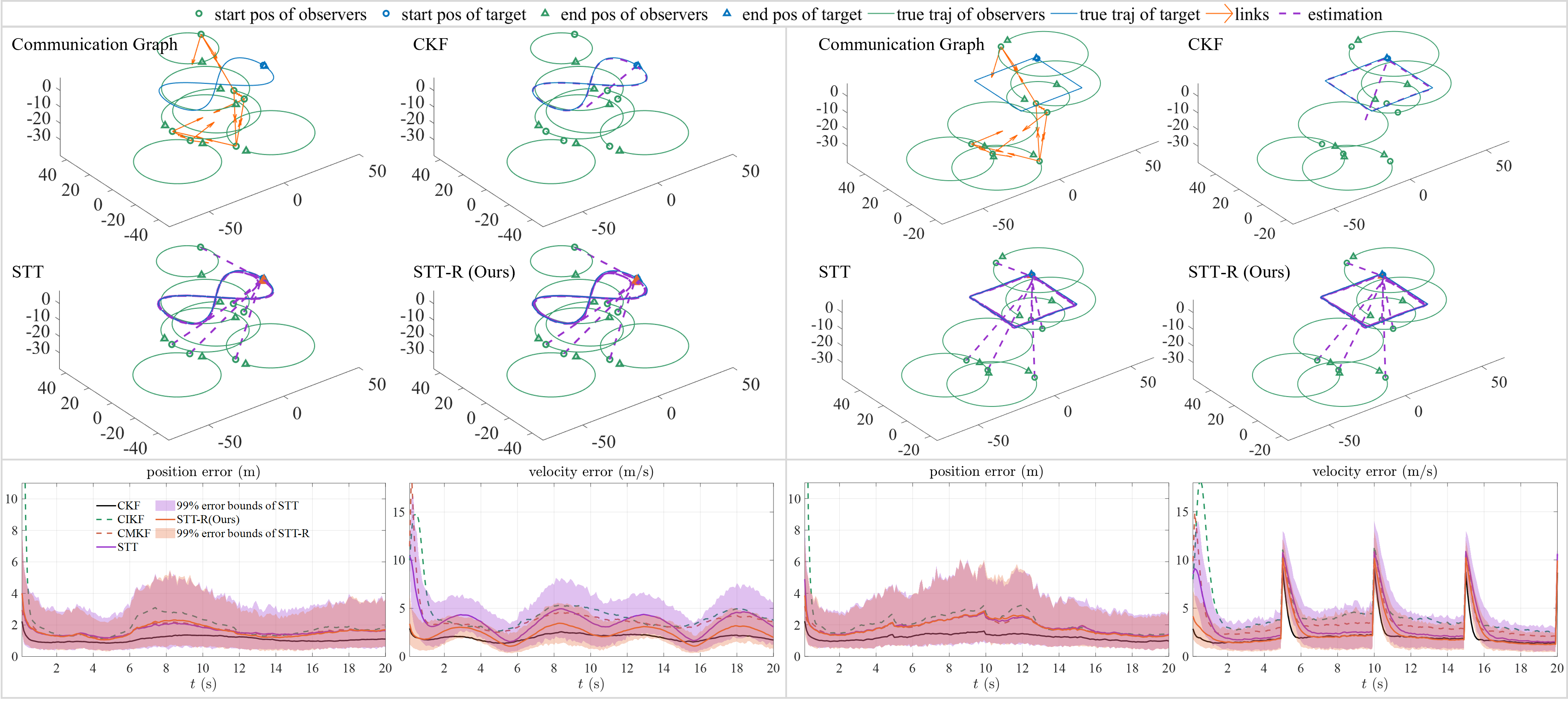}
    \caption{The simulation results of CKF, CIKF, CMKF, STT, and STT-R in 8-shape motion and square-shape motion, respectively.}\label{fig_sim}
\end{figure*}

\begin{theorem}[Multi-observer observability condition]\label{theo_multi_obs}
The observability matrix $\mathbf{Q}_{i,k}$ is of full column rank if and only if at least one neighbor of observer $i$ has a bearing vector that is non-parallel to $\g_{i,k}$.
\end{theorem}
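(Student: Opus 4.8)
The plan is to reduce the full-column-rank condition on $\mathbf{Q}_{i,k}$ to a statement about the intersection of the null spaces of the per-observer measurement blocks. First I would note that $\A$ is block upper triangular with identity diagonal blocks, so $\A^k$ is invertible; hence $\mathbf{Q}_{i,k}$ has full column rank if and only if the matrix obtained by stripping the common right factor $\A^k$ does, i.e.\ if and only if the stacked matrices $\left[\begin{smallmatrix}\H_{\g_{j,k}}\\\H_{\h_{j,k}}\end{smallmatrix}\right]$ over all $j$ with $w_{ij,k}=1$ (a set that always contains $j=i$) jointly have trivial null space. This rephrases the theorem as: $\bigcap_{j:\,w_{ij,k}=1}\ker\!\left[\begin{smallmatrix}\H_{\g_j}\\\H_{\h_j}\end{smallmatrix}\right]=\{\zero\}$ if and only if some neighbor's bearing is non-parallel to $\g_{i,k}$, working, as is standard in observability analysis, with the noiseless measurements $\tilde{\g}_j=\g_j$, $\tilde{\h}_j=\h_j$.

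The second step is to characterize, for a single observer $j$, the null space of its $6\times 6$ block $\left[\begin{smallmatrix}\H_{\g_j}\\\H_{\h_j}\end{smallmatrix}\right]=\left[\begin{smallmatrix}\P_{\g_j}&\zero\\-\h_j\g_j^T&\P_{\g_j}\end{smallmatrix}\right]$. Writing $\x=[\p^T,\vel^T]^T$, the top block forces $\p=c\,\g_j$ since $\ker\P_{\g_j}=\mathrm{span}(\g_j)$; substituting into the bottom block and using $\g_j^T\g_j=1$, $\h_j^T\g_j=0$ (hence $\P_{\g_j}\h_j=\h_j$) and $\P_{\g_j}\g_j=\zero$ yields $\P_{\g_j}\vel=c\,\h_j$, so $\vel=c\,\h_j+d\,\g_j$. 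Thus each block has a two-dimensional null space $\{[\,c\,\g_j^T,\ (c\,\h_j+d\,\g_j)^T\,]^T\}$ and rank $4$ — already showing a single observer is not fully observable, and making explicit the role of the coupling term $-\h_j\g_j^T$ in $\H_{\h_j}$.

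For the \emph{if} direction I would intersect observer $i$'s null space with that of a neighbor $j^\star$ whose bearing is non-parallel to $\g_{i,k}$. Equating position parts gives $c\,\g_i=c'\,\g_{j^\star}$, which forces $c=c'=0$ because $\g_i\not\parallel\g_{j^\star}$ — the fact $\ker\P_{\g_i}\cap\ker\P_{\g_{j^\star}}=\{\zero\}$ here is precisely the content of Lemma~\ref{lemma_rank_P_1_P_2}. The velocity parts then reduce to $d\,\g_i=d'\,\g_{j^\star}$, forcing $d=d'=0$, so already $\ker\!\left[\begin{smallmatrix}\H_{\g_i}\\\H_{\h_i}\end{smallmatrix}\right]\cap\ker\!\left[\begin{smallmatrix}\H_{\g_{j^\star}}\\\H_{\h_{j^\star}}\end{smallmatrix}\right]=\{\zero\}$, hence $\mathbf{Q}_{i,k}$ has full column rank. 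For the \emph{only if} direction I would use the contrapositive: if every available neighbor $j$ has $\g_j\parallel\g_i$, then $\g_j=\pm\g_i$ and $\P_{\g_j}=\P_{\g_i}$, so $\x_0:=[\,\zero^T,\ \g_i^T\,]^T\neq\zero$ satisfies $\H_{\g_j}\x_0=\zero$ and $\H_{\h_j}\x_0=\P_{\g_j}\g_i=\P_{\g_i}\g_i=\zero$ for all such $j$; thus $\x_0$ lies in the joint null space and $\mathbf{Q}_{i,k}$ is not of full column rank.

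I expect the main obstacle to be careful bookkeeping rather than any deep difficulty: getting the per-observer null space exactly right (the two free scalars, and the identities $\P_{\g_j}\h_j=\h_j$ and $\P_{\g_j}\g_j=\zero$), and, conceptually, making sure the structure of $\H_{\h_j}$ is used essentially — without the rate rows every $\H_{\g_j}$ has zero velocity columns, so full observability is impossible whatever the bearing geometry, and it is the interplay between the $-\h_j\g_j^T$ block (which couples the otherwise unobservable position direction to a velocity component) and the projector $\P_{\g_j}$ acting on velocity that lets non-parallel bearings suffice. It is also worth stating explicitly the reduction that removes $\A^k$ and the observation that $w_{ii,k}=1$ guarantees observer $i$'s own two rows are present.
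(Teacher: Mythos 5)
Your proof is correct, and it reaches the paper's conclusion by a genuinely different, kernel-based route rather than the paper's rank-manipulation argument. The paper also reduces sufficiency to the pair $\{i,j^\star\}$, but it does so by forming the $12\times 6$ submatrix $\mathbf{Q}_{ij,k}$, applying elementary row and column operations to reach a block matrix $\mathbf{C}$, and computing ${\rm rank}(\mathbf{C})={\rm rank}(\mathbf{C}^T\mathbf{C})$ so that Lemma~\ref{lemma_rank_P_1_P_2} can be invoked in the form ${\rm rank}(\P_{\g_i}+\P_{\g_{j^\star}})=3$; necessity is likewise a rank count on the transformed matrix. You instead characterize each observer's null space explicitly as the two-parameter family $[\,c\,\g_j^T,\ (c\,\h_j+d\,\g_j)^T\,]^T$ and intersect these subspaces, using Lemma~\ref{lemma_rank_P_1_P_2} only in the equivalent form $\ker\P_{\g_i}\cap\ker\P_{\g_{j^\star}}=\{\zero\}$ (equivalent because the projectors are positive semidefinite). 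Your version buys two things the paper's does not make explicit: the exact two-dimensional unobservable subspace of a single observer (rank $4$ per block, which isolates the contribution of the $-\h_j\g_j^T$ coupling), and, in the necessity direction, a concrete unobservable direction $[\,\zero^T,\ \g_i^T\,]^T$ rather than only a deficiency count; your preliminary step of stripping the invertible factor $\A^k$ also cleanly justifies what the paper folds into its ``elementary transformations.'' The only cosmetic point is that, after stripping $\A^k$, the corresponding null vector of $\mathbf{Q}_{i,k}$ itself is $\A^{-k}\x_0$ rather than $\x_0$; this is harmless given the equivalence you state at the outset, but is worth a half-sentence.
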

\begin{proof}
\emph{Sufficiency:}
Suppose observer $j$'s bearing vector $\g_{j,k}$ is non-parallel to $\g_{i,k}$.
Define
\begin{align}
\mathbf{Q}_{ij,k} = \begin{bmatrix}
\P_{\g_{i,t}} & k\Delta t\P_{\g_{i,t}}\\
-\h_{i,t}\g_{i,t}^T & \P_{\g_{i,t}} - k\Delta\h_{i,t}\g_{i,t}^T  \\
\P_{\g_{j, t}} &k\Delta t\P_{\g_{j, t}}\\
-\h_{j, t}\g_{j, t}^T & \P_{\g_{j, t}} - k\Delta t\h_{j, t}\g_{j, t}^T
\end{bmatrix}.
\end{align}		
Since $\mathbf{Q}_{ij,k}$ is a sub-matrix of $\mathbf{Q}_{i,k}$, we have ${\rm rank}(\mathbf{Q}_{i,k}) \ge {\rm rank}(\mathbf{Q}_{ij,k})$.
After a series of elementary transformations of $\mathbf{Q}_{ij,k}$, we have
\begin{align*}
\mathbf{Q}_{ij,k} \rightarrow  \begin{bmatrix}
\P_{\g_{i,t}} & \zero\\
\P_{\g_{j, t}} & \zero\\
-\h_{i,t}\g_{i,t}^T & \P_{\g_{i,t}}  \\
-\h_{j, t}\g_{j, t}^T & \P_{\g_{j, t}}
\end{bmatrix}\rightarrow \begin{bmatrix}
\I & \zero\\
\zero &\zero\\
\zero & \P_{\g_{i,t}} \\
\zero & \P_{\g_{j, t}}
\end{bmatrix}\triangleq \mathbf{C}.
\end{align*}
According to the properties of the matrix rank, we have
\begin{align*}
{\rm rank}(\mathbf{C}) & = {\rm rank}(\mathbf{C}^{T}\mathbf{C}) = {\rm rank}\left(\begin{bmatrix}
\I & \zero\\
\zero & \P_{\g_{i,t}}+ \P_{\g_{j, t}}
\end{bmatrix}\right).
\end{align*}
When deriving the above equation, we used the property that $\P_{\g_{i,t}}^T = \P_{\g_{i,t}} = \P_{\g_{i,t}}^2$.
Since $\g_{i,t}$ and $\g_{j,t}$ are non-parallel, we know that $\P_{\g_{i,t}}+ \P_{\g_{j, t}}$ is of full rank according to Lemma~\ref{lemma_rank_P_1_P_2}.
Therefore, ${\rm rank }(\mathbf{C})=6$ and hence ${\rm rank}(\mathbf{Q}_{i,k})\ge {\rm rank}(\mathbf{Q}_{ij,k})=6$. Since $\mathbf{Q}_{i,k}$ has only 6 columns, we know ${\rm rank}(\mathbf{Q}_{i,k})=6$.

\emph{Necessity:}
Suppose that all the neighbors' bearing vectors of observer $i$ are parallel to $\g_{i,k}$. Then after a series of elementary operations, $\mathbf{Q}_{i,k}$ can be expressed as
\begin{align*}
\mathbf{Q}_{i,k} \rightarrow \begin{bmatrix}
w_{i1,k}\P_{\g_{1,k}} & \zero\\
\vdots & \vdots\\
w_{in,k}\P_{\g_{n,k}} & \zero\\
-w_{i1,k}\h_{1, k}\g^T_{1,k} & w_{i1,k}\P_{\g_{1,k}}\\
\vdots & \vdots\\
-w_{in,k}\h_{n, k}\g^T_{n,k} & w_{in,k}\P_{\g_{n,k}}
\end{bmatrix}.
\end{align*}
According to Lemma~\ref{lemma_rank_P_1_P_2}, the rank of the last three columns of $\mathbf{Q}_{i,k}$ is 2 since all available bearing vectors ($w_{ij,k}=1$) are parallel.
As a result, $\mathbf{Q}_{i,k}$ is rank deficient in this case.
\end{proof}

\section{Simulation Verification}\label{sec_sim}

\subsection{Simulation Setup}

\begin{figure*}
\centering
\includegraphics[width=\linewidth]{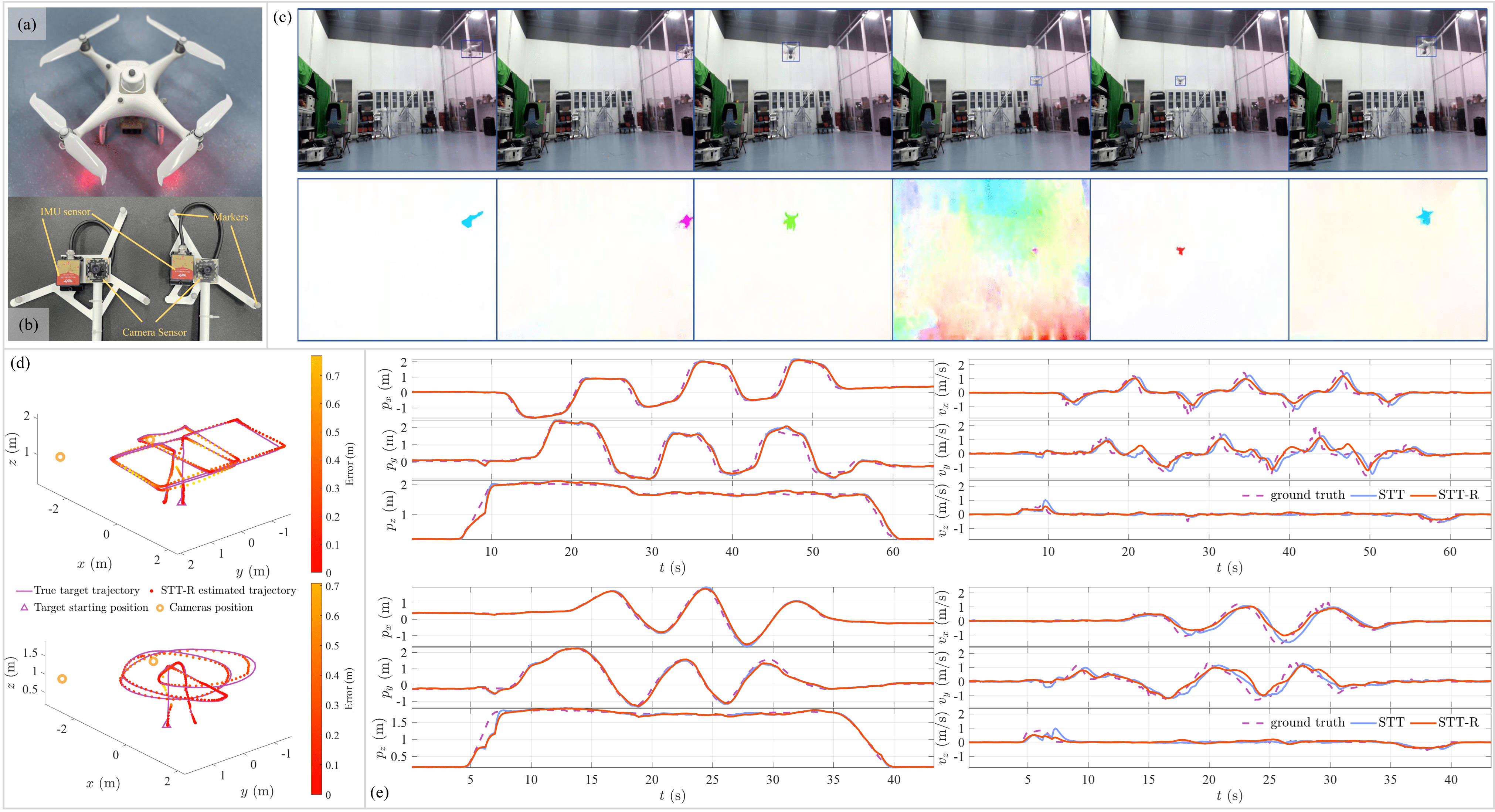}
\caption{(a) is the target MAV (DJI Phantom 4). (b) are the two observation devices. Each observer device consists of a camera and an IMU sensor. (c) The target detection results (upper images) and the optical flow detection results (lower images). The color of each pixel in the optical flow detection results indicates the velocity of that pixel in the image frame. (d) shows the 3D Estimated trajectories in two scenarios. (e) shows the estimated positions and velocities in two scenarios, respectively.}\label{fig_exp}
\end{figure*}

Six observers are randomly generated in the space of $80\times 80\times 40~{\rm m}^3$. To make the simulation scenario more complicated, we let each observer undergo circular motion with different angular velocities. Every observer can exchange information with the three nearest neighbors ($|\mathcal{N}_i|=3$) at every time step.

The performance of the STT-R algorithm is compared with the central Kalman filter (CKF), consensus on measurements Kalman filter (CMKF) \cite{olfati2009kalman}, consensus on information Kalman filter (CIKF) \cite{olfati2005consensus}, and the original STT algorithm \cite{zheng2023optimal}. The parameters of the three algorithms are given in Table~\ref{tab_params}.
In particular, the performance of CKF is influenced by three matrices: $\R_{\g}$, $\R_{\h}$, and $\mathbf{Q}$. Here, we select $\R_\g = r_\g^{-1}\sigma_\g^{-2}\I$, $\R_{\h} = r_{\h}^{-1}\sigma_{\h}^{-2}\I$, and $\mathbf{Q} =q[dt^3/3,dt^2/2;dt^2/2,dt]\otimes \I_3$. All the parameters in Table~\ref{tab_params} have been optimized by using a genetic algorithm \cite{conn1991globally}.

\begin{table}[t]
\caption{Optimized parameters of algorithms.}\label{tab_params}
\centering
\begin{tabular}{c|ccccc}
\hline
Parameters &  CKF  & CIKF & CMKF &  STT  & STT-R  \\ \hline
$r_\g$ or $\alpha$    & 0.984 & 1.135 & 3.543 & 1.328 & 1.328  \\
$r_{\h}$ or $\beta$ & 6.354  & $-$ & $-$ & $-$   & 1.442 \\
$q$     & 8.96e-3 & 1.48e-2  & 6.96e-2 & $-$ & $-$ \\
$\gamma_1$  & $-$ & $-$ & $-$ & 8.1481 & 8.1481\\
$\gamma_2$  & $-$ & $-$ & $-$ &  6   & 6\\
c ($c_1$)  & $-$ & $-$ & $-$ & 0.354 & 0.165 \\
$c_2$  & $-$ & $-$ & $-$ & $-$ & 0.032 \\
$\zeta$ & $-$ & 0.25 & 0.25 & 0.25 & 0.25 \\
\hline
\end{tabular}
\end{table}

\emph{Setup:}
Two different motions are assigned to the target: an 8-shaped motion and a square-shaped motion. In the 8-shaped motion, the varying velocity is $\vel_{{\rm T}} = 10[-\sin(t\pi/10), \cos(t\pi/5), 0]^T$. In the square-shaped motion, the target's velocity is $8~{\rm m/s}$, with a 90-degree direction rotation every 5 seconds.
The initial estimated target position of CKF is the central position of all the observers $\hat{\x}_0=[\sum^n_{i=1}\p_{i, 0}^T, \zero_{3\times1}^T]^T$. The initial estimate of every observer is selected as $\hat{\x}_{i, 0}=[\p_{i, 0}^T, \zero_{3\times1}^T]^T$ in the CIKF, CMKF, STT and STT-R algorithms.
The standard deviations of different noises are given as $\sigma_{\g_w}=5.7~\text{degree}$, $\sigma_{\h_c}=4.6~\text{degree}/{\rm s}$, and $\sigma_{\omega_c}=1.1~\text{degree}/{\rm s}$, respectively.

\emph{Result:}
Fig.~\ref{fig_sim} shows the estimation results of the CKF, CIKF, CMKF, STT, and STT-R, respectively.
We have the following observations. First, STT-R's velocity estimation is significantly better than that of CIKF, CMKF, and STT. That is because it properly utilizes additional bearing rate measurements. Second, STT-R's position estimation is comparable to that of CIKF, CMKF, and STT.
That is partially because, from an information theoretical perspective, the information used for position estimation by all these algorithms is the same as the bearing measurements.

\section{Real-World Experiments}\label{sec_exp}

In this section, the performance of the proposed STT-R algorithm is evaluated in real-world experiments. We conduct two indoor experiments with two observers tracking a flying MAV.
The experiment setup is depicted in Fig.~\ref{fig_env}. The target MAV and the observers are shown in Fig.~\ref{fig_exp}~(a) and (b), respectively. The ground-truth states of the two observers and the target MAV are obtained by the indoor motion capture system.

To detect the MAV target, we train a YOLOv5s detector based on a dataset of more than 4,000 images of the target MAV in the experimental environment. Starting from pre-trained weights of YOLOv5s on MS-COCO, we train the detector for 200 epochs with a batch size of 32. The camera's intrinsic parameters are calibrated beforehand. Readers can, of course, use other detectors as well.
To detect the optical flow, we employ the off-the-shelf optical flow model RAFT \cite{2020RAFT} trained on synthetic datasets \cite{raft-things} to extract the target's velocity.
Samples of target detection and optical flow detection results are shown in Fig.~\ref{fig_exp}~(e).
We tested the computational efficiency of the detection, optical flow, and estimation algorithms on an Nvidia Jetson Xavier NX embedded computer.

Two experiments were conducted. In the first one, the target MAV makes a rapid directional change, resembling a square trajectory. In the second one, the target MAV flies randomly. The experimental results are shown in Fig.~\ref{fig_exp}~(d) and (e).

The experimental results demonstrate that STT-R achieves better estimation accuracy. The lag in the velocity estimation is also reduced.

Compared with the STT algorithm, the introduction of bearing rate measurements improves the tracking performance in the target's state estimation, especially in the target's velocity tracking. When tracking a maneuvering target, the STT-R algorithm reduces the lag in velocity tracking and improves the tracking accuracy.

\section{Conclusion}\label{sec_conclusion}

In this paper, we propose a new motion estimator called STT-R. It incorporates the bearing rate measurements that have not been well utilized in the past and can hence significantly improve the system observability. The necessary and sufficient observability conditions show that the cooperative system is observable for any single time step. This is a significant observability enhancement compared to the bearing-only case, where multiple time steps are required to ensure observability. Numerical simulations and real-world experiments are presented to verify the effectiveness of the proposed estimator.

\bibliography{ zcl_references}

\begin{thebibliography}{10}

\bibitem{vrba2020marker}
M.~Vrba and M.~Saska, ``Marker-less micro aerial vehicle detection and
  localization using convolutional neural networks,'' {\em IEEE Robotics and
  Automation Letters}, vol.~5, no.~2, pp.~2459--2466, 2020.

\bibitem{li2022three}
J.~Li, Z.~Ning, S.~He, C.-H. Lee, and S.~Zhao, ``Three-dimensional bearing-only
  target following via observability-enhanced helical guidance,'' {\em IEEE
  Transactions on Robotics}, vol.~39, no.~2, pp.~1509--1526, 2023.

\bibitem{saini2019markerless}
N.~Saini, E.~Price, R.~Tallamraju, R.~Enficiaud, R.~Ludwig, I.~Martinovic,
  A.~Ahmad, and M.~Black, ``Markerless outdoor human motion capture using
  multiple autonomous micro aerial vehicles,'' in {\em 2019 IEEE/CVF
  International Conference on Computer Vision (ICCV)}, pp.~823--832, 2019.

\bibitem{brighton2019hawks}
C.~H. Brighton and G.~K. Taylor, ``Hawks steer attacks using a guidance system
  tuned for close pursuit of erratically manoeuvring targets,'' {\em Nature
  communications}, vol.~10, no.~1, p.~2462, 2019.

\bibitem{griffin2021depth}
B.~A. Griffin and J.~J. Corso, ``Depth from camera motion and object
  detection,'' in {\em 2021 IEEE/CVF Conference on Computer Vision and Pattern
  Recognition (CVPR)}, pp.~1397--1406, 2021.

\bibitem{su2022bearing}
H.~Su, C.~Chen, Z.~Yang, S.~Zhu, and X.~Guan, ``Bearing-based formation
  tracking control with time-varying velocity estimation,'' {\em IEEE
  Transactions on Cybernetics}, vol.~53, no.~6, pp.~3961--3973, 2023.

\bibitem{sharma2011graph}
R.~Sharma, R.~W. Beard, C.~N. Taylor, and S.~Quebe, ``Graph-based observability
  analysis of bearing-only cooperative localization,'' {\em IEEE Transactions
  on Robotics}, vol.~28, no.~2, pp.~522--529, 2012.

\bibitem{ning2024bearingangle}
Z.~Ning, Y.~Zhang, J.~Li, Z.~Chen, and S.~Zhao, ``A bearing-angle approach for
  unknown target motion analysis based on visual measurements,'' {\em The
  International Journal of Robotics Research,(arXiv:2401.17117)}, vol.~43,
  no.~8, pp.~1228--1249, 2024.

\bibitem{flayac2023nonuniform}
E.~Flayac and I.~Shames, ``Nonuniform observability for moving horizon
  estimation and stability with respect to additive perturbation,'' {\em SIAM
  Journal on Control and Optimization}, vol.~61, no.~5, pp.~3018--3050, 2023.

\bibitem{sharma2013bearing}
R.~Sharma, S.~Quebe, R.~W. Beard, and C.~N. Taylor, ``Bearing-only cooperative
  localization,'' {\em Journal of Intelligent \& Robotic Systems}, vol.~72,
  no.~3, pp.~429--440, 2013.

\bibitem{zhao2014optimal}
S.~Zhao, B.~M. Chen, and T.~H. Lee, ``Optimal deployment of mobile sensors for
  target tracking in 2d and 3d spaces,'' {\em IEEE/CAA Journal of Automatica
  Sinica}, vol.~1, no.~1, pp.~24--30, 2014.

\bibitem{schiano2018dynamic}
F.~Schiano and R.~Tron, ``The dynamic bearing observability matrix nonlinear
  observability and estimation for multi-agent systems,'' in {\em 2018 IEEE
  International Conference on Robotics and Automation (ICRA)}, pp.~3669--3676,
  2018.

\bibitem{zheng2023optimal}
C.~Zheng, Y.~Mi, H.~Guo, H.~Chen, F.~Chen, J.~Jia, Z.~Lin, and S.~Zhao,
  ``Optimal spatial-temporal triangulation for bearing-only cooperative motion
  estimation,'' {\em arXiv preprint arXiv:2310.15846}, 2023.

\bibitem{shen2013vision}
S.~Shen, Y.~Mulgaonkar, N.~Michael, and V.~Kumar, ``Vision-based state
  estimation and trajectory control towards high-speed flight with a
  quadrotor.,'' in {\em Robotics: science and systems}, vol.~1, p.~32, Berlin,
  Germany, 2013.

\bibitem{gardner2022pose}
M.~Gardner and Y.-B. Jia, ``Pose and motion estimation of free-flying objects:
  aerodynamics, constrained filtering, and graph-based feature tracking,'' {\em
  IEEE Transactions on Robotics}, vol.~38, no.~5, pp.~3187--3202, 2022.

\bibitem{riaz2022state}
S.~Riaz and A.-H.~I. Mourad, ``State estimation of fixed wing micro air vehicle
  through cascaded discrete time extended kalman filter (ekf) schemes,'' in
  {\em 2022 Advances in Science and Engineering Technology International
  Conferences (ASET)}, pp.~1--6, IEEE, 2022.

\bibitem{2020RAFT}
Z.~Teed and J.~Deng, ``{RAFT}: Recurrent all-pairs field transforms for optical
  flow,'' in {\em Proceedings of the 16th European Conference on Computer
  Vision (ECCV)}, pp.~402--419, Springer, 2020.

\bibitem{zhao2019bearing}
S.~Zhao and D.~Zelazo, ``Bearing rigidity theory and its applications for
  control and estimation of network systems: Life beyond distance rigidity,''
  {\em IEEE Control Systems Magazine}, vol.~39, no.~2, pp.~66--83, 2019.

\bibitem{mateos2012distributed}
G.~Mateos and G.~B. Giannakis, ``Distributed recursive least-squares: Stability
  and performance analysis,'' {\em IEEE Transactions on Signal Processing},
  vol.~60, no.~7, pp.~3740--3754, 2012.

\bibitem{redmon2018yolov3}
J.~Redmon and A.~Farhadi, ``Yolov3: An incremental improvement,'' {\em arXiv
  preprint arXiv:1804.02767}, 2018.

\bibitem{bochkovskiy2020yolov4}
A.~Bochkovskiy, C.-Y. Wang, and H.-Y.~M. Liao, ``Yolov4: Optimal speed and
  accuracy of object detection,'' {\em arXiv preprint arXiv:2004.10934}, 2020.

\bibitem{2021Air}
Y.~Zheng, Z.~Chen, D.~Lv, Z.~Li, and S.~Zhao, ``Air-to-air visual detection of
  micro-{UAV}s: An experimental evaluation of deep learning,'' {\em IEEE
  Robotics and Automation Letters}, vol.~6, no.~2, pp.~1020--1027, 2021.

\bibitem{wu2020using}
D.~Wu, S.~Lv, M.~Jiang, and H.~Song, ``Using channel pruning-based yolo v4 deep
  learning algorithm for the real-time and accurate detection of apple flowers
  in natural environments,'' {\em Computers and Electronics in Agriculture},
  vol.~178, p.~105742, 2020.

\bibitem{isaac2021unmanned}
B.~K. Isaac-Medina, M.~Poyser, D.~Organisciak, C.~G. Willcocks, T.~P. Breckon,
  and H.~P. Shum, ``Unmanned aerial vehicle visual detection and tracking using
  deep neural networks: A performance benchmark,'' in {\em Proceedings of the
  IEEE/CVF International Conference on Computer Vision}, pp.~1223--1232, 2021.

\bibitem{raft-things}
N.~Mayer, E.~Ilg, P.~Hausser, P.~Fischer, D.~Cremers, A.~Dosovitskiy, and
  T.~Brox, ``A large dataset to train convolutional networks for disparity,
  optical flow, and scene flow estimation,'' in {\em Proceedings of the IEEE
  Conference on Computer Vision and Pattern Recognition}, pp.~4040--4048, 2016.

\bibitem{zhao2016time}
S.~Zhao, ``Time derivative of rotation matrices: A tutorial,'' {\em arXiv
  preprint arXiv:1609.06088}, 2016.

\bibitem{ning2023comparison}
Z.~Ning, Y.~Zhang, and S.~Zhao, ``Comparison of different pseudo-linear
  estimators for vision-based target motion estimation,'' {\em Control Theory
  and Technology}, vol.~21, no.~3, pp.~448--457, 2023.

\bibitem{olfati2009kalman}
R.~Olfati-Saber, ``Kalman-consensus filter: Optimality, stability, and
  performance,'' in {\em Proceedings of the 48h IEEE Conference on Decision and
  Control (CDC) held jointly with 2009 28th Chinese Control Conference},
  pp.~7036--7042, IEEE, 2009.

\bibitem{olfati2005consensus}
R.~Olfati-Saber and J.~S. Shamma, ``Consensus filters for sensor networks and
  distributed sensor fusion,'' in {\em Proceedings of the 44th IEEE Conference
  on Decision and Control}, pp.~6698--6703, IEEE, 2005.

\bibitem{conn1991globally}
A.~R. Conn, N.~I. Gould, and P.~Toint, ``A globally convergent augmented
  lagrangian algorithm for optimization with general constraints and simple
  bounds,'' {\em SIAM Journal on Numerical Analysis}, vol.~28, no.~2,
  pp.~545--572, 1991.

\end{thebibliography}
\bibliographystyle{ieeetr}

\end{document}